\def\sm{{SM}\xspace}
\def\smt{{SMT}\xspace}
\def\sr{{SR}\xspace}
\def\sri{{SRI}\xspace}
\def\srt{{SRT}\xspace}
\def\srti{{SRTI}\xspace}
\def\srtiasp{{\sc SRTI-ASP}\xspace}
\def\sricp{{\sc SRI-CP}\xspace}
\def\sraf{{\sc SR-AF}\xspace}
\def\sriaf{{\sc SRI-AF}\xspace}
\def\clingo{{\sc Clingo}\xspace}
\def\choco{{\sc Choco}\xspace}
\def\lar{\leftarrow}
\def\ba{\begin{array}}
\def\ea{\end{array}}
\def\beq{\begin{equation}}
\def\eeq#1{\label{#1}\end{equation}}
\def\no{\ii{not}}
\def\ii#1{\hbox{\it #1\/}}
\def\is#1{\hbox{\scriptsize\it #1\/}}
\newtheorem{theorem}{Theorem}
\title[Theory and Practice of Logic Programming]
        {A General Framework for Stable Roommates Problems \\ using Answer Set Programming\thanks{This work has been partially supported by Sabanci University IRP Grant, and the Scottish Informatics and Computer Science Alliance DVF Programme. The third and fourth authors were supported by grants EP/P028306/1 and EP/P026842/1 from the Engineering and Physical Sciences Research Council, respectively.}}
\author[Erdem, Fidan, Manlove, Prosser]{
        ESRA ERDEM and M\"UGE FIDAN\\
        Faculty of Engineering and Natural Sciences, Sabanci University, Istanbul, Turkey \\
        \{esraerdem,mugefidan\}@sabanciuniv.edu
        \and
        DAVID MANLOVE and PATRICK PROSSER\\
        School of Computing Science, University of Glasgow, Glasgow, UK\\
        \{david.manlove,patrick.prosser\}@glasgow.ac.uk }
\begin{document}

\label{firstpage}

\maketitle

\begin{abstract}
The Stable Roommates problem (SR) is characterized by the preferences of agents over other agents as roommates: each agent ranks all others in strict order of preference. A solution to SR is then a partition of the agents into pairs so that each pair shares a room, and there is no pair of agents that would block this matching (i.e., who prefers the other to their roommate in the matching). There are interesting variations of SR that are motivated by applications (e.g., the preference lists may be incomplete (SRI) and involve ties (SRTI)), and that try to find a more fair solution (e.g., Egalitarian SR). Unlike the Stable Marriage problem, every SR instance is not guaranteed to have a solution. For that reason, there are also variations of SR that try to find a good-enough solution (e.g., Almost SR). Most of these variations are NP-hard. We introduce a formal framework, called SRTI-ASP, utilizing the logic programming paradigm Answer Set Programming, that is provable and general enough to solve many of such variations of SR. Our empirical analysis shows that SRTI-ASP is also promising for applications.  This paper is under consideration for acceptance in TPLP.
\end{abstract}

\begin{keywords}
    stable roommates problem, answer set programming, declarative problem solving
\end{keywords}

%------------------------------------------------------------------------
\section{Introduction} \label{sec:intro}

The Stable Roommates problem~\cite{galeShapley1962} (\sr) is a matching problem (well-studied in Economics and Game Theory) characterized by the preferences of an even number $n$ of agents over other agents as roommates: each agent ranks all others in strict order of preference. A solution to \sr is then a partition of the agents into pairs that are {\em acceptable} to each other (i.e., they are in the preference lists of each other), and the matching is {\em stable} (i.e., there exist no two agents who prefer each other to their roommates, and thus {\em block} the matching).

\sr is an interesting computational problem, not only due to its applications (e.g., for pairing in large-scale chess competitions~\cite{kujansuuLM1999}, for campus house allocation~\cite{arkin2009}, pairwise kidney exchange~\cite{ROTH2005},
creating partnerships in P2P networks~\cite{Gai2007}) but also due to its computational properties described below.

\smallskip\noindent {\em Incomplete preference lists with ties.}
Upon a question posed by Knuth~\citeyear{knuth76} in 1976 about the existence of an algorithm for \sr, Irving~\citeyear{irving1985} developed a linear-time algorithm for \sr. Meanwhile, researchers have started investigating variations of \sr motivated by further observations and applications. For instance, in practice (like large-scale chess tournaments), agents may find it difficult to rank a large number of alternatives in strict order of preference. With such motivations, \sr has been studied with incomplete preference lists (\sri)~\cite{gusfield89}, with preference lists including ties (\srt)~\cite{RONN1990}, and with incomplete preference lists including ties (\srti)~\cite{irving2002}.
Interestingly, some of these slight variations (i.e., the existence of a stable matching in \srt and \srti) are proven to be NP-complete (Table~\ref{tab:complexity}).

\smallskip\noindent{\em Stable and more fair solutions.}
With the motivation of finding more fair stable solutions, variations of \sr have been studied. For instance, Egalitarian \sr aims to maximize the total satisfaction of preferences of all agents; it is NP-hard~\cite{feder1992}.
Rank Maximal \sri aims to maximize the number of agents matched with their first preference, and then, subject to this condition, to maximize the number of agents matched with their second preference, and so on; it is also NP-hard~\cite{Cooper2020}.

\smallskip \noindent {\em Not stable but good-enough solutions.}
As first noted by Gale and Shapley~\citeyear{galeShapley1962}, unlike the Stable Marriage problem (\sm), there is no guarantee to find a solution to every \sr problem instance (i.e., there might be no stable matching).  When an \sr instance does not have a stable solution, variations of \sr have been studied to find a good-enough solution. Almost \sr aims to minimize the total number of blocking pairs (i.e., pairs of agents who prefer each other to their roommates); it is NP-hard~\cite{abraham2005}.

\smallskip
Alongside these interests in \sr, some exact methods and software \cite{sricp2020,matwa2020} have been developed to solve \sr and \sri (both solvable in poly-time) using Constraint Programming (CP)~\cite{Prosser2014}, and based on Irving's algorithm~\cite{irving1985}.
However, to the best of the authors' knowledge, there is no exact method (except for the enumeration based method for Egalitarian \sri) and implementation, that provides a solution to any intractable variation of \sr, described in three groups above.

\smallskip \noindent {\em Our Contributions.} We introduce a formal framework and its implementation, called \srtiasp, that are general enough to provide solutions to all variations of \sr mentioned above, including the intractable decision/optimization versions: \srt, \srti, Egalitarian \srti, Rank Maximal \srti, Almost \srti. \srtiasp provides a flexible framework to study variations of \sr.

\srtiasp utilizes a logic programming paradigm, called Answer Set Programming (ASP) \cite{BrewkaEL16}, to declaratively solve stable roommates problems.
We represent \sri and its variations in the expressive formalism of ASP, and \srtiasp computes models of these formulations using the ASP solver~\clingo~\cite{GebserKKOSS11}.
For each variation of \sr, given a problem instance, \srtiasp returns a solution (or all solutions) if one exists; otherwise, it returns that the problem does not have a solution.  We prove that \srtiasp is sound and complete (Theorem~\ref{thm:correct}).

We have evaluated \srtiasp over different sizes of \sri instances (randomly generated with the software~\cite{sricp2020}, called \sricp from now on) to understand its scalability, as the input size, and the degree of completeness of preference lists increase. We have developed a method to add ties to these instances, and empirically analyzed the scalability of \srtiasp on \srti instances as well.

We have compared \srtiasp with \sricp, over \sri instances. We have also investigated the use of \sricp to solve Egalitarian \sri, Rank Maximal \sri and Almost \sri based on enumeration-based brute-force methods, and compared \srtiasp with these methods.

In addition, we have compared \srtiasp with the ASP method proposed by Amendola~\citeyear{Amendola18} for \sr (called \sraf from now on) based on Argumentation Framework (AF)~\cite{DUNG1995}, over \sr instances.

           \begin{table}[ht]
                \centering
               \caption{Summary of the complexities of \sr problems}
            	\begin{tabular}{ll}
            		\hline \hline
            		Problem & Complexity \\
            		\hline
                    \sr & P \cite{irving1985}\\
                    \sri & P \cite{gusfield89} \\
                    \srti (super) & P \cite{irving2002} \\
                    \srti (strong) & P \cite{kunysz2016,scott2005} \\
                    \srt (weak) (and thus \srti (weak)) & NP-complete \cite[Thm 1.1, Prop 2.2]{RONN1990} \\ %\cite{irving2002}\\
                    \srti (weak) & NP-complete$^{*}$ \cite[Thm 5]{Irving2009} \\ % \cite[Thm 6.1]{irving2002} \\
                    %\hline
                    Egalitarian \sr & NP-hard \cite[Thm 8.3]{feder1992} \\
            	    Egalitarian \sri & NP-hard$^{*}$ \cite[Cor 4]{Cseh2019} \\
            		Almost \sr (and thus \srt (weak))  & NP-hard \cite[Thm 1]{abraham2005} \\
            		%Almost \srt (weak) & NP-hard \cite[Thm2]{abraham2005} \\
            	    Almost \sri  & NP-hard$^{*}$ \cite[Thm 1]{Biro2012} \\
  				\hline \hline          	
            	\end{tabular}
            	\footnotesize{$^{*}$ for short lists of size $\leq$ 3}  \\

                \vspace{-.7\baselineskip}
            	\label{tab:complexity}
                \vspace{-.75\baselineskip}
            \end{table}

%------------------------------------------------------------------------
\section{Stable Roommates Problems}\label{sec:sr}

Let us start with defining the Stable Roommate problem with Incomplete lists (\sri).
Let $A$ be a finite set of agents. For every agent $x\in A$, let $\prec_{x}$ be a strict and total ordering of preferences over a subset $A_x$ of $A \backslash \{x\}$.  We refer to $\prec_{x}$ as agent $x$'s preference list. For two agents $y$ and $z$, we denote by $y \prec_{x} z$ that $x$ prefers $y$ to $z$. Since the ordering of preferences is strict and total, for every agent $x\in A$, for every two distinct agents $y$ and $z$ in $A_x$, either $y \prec_{x} z$ or $z \prec_{x} y$. Note that the preferences of agents with respect to $\prec_{x}$ are transitive and asymmetric. If an agent $x$ is in $y$'s preference list, then $x$ is called {\em acceptable} to $y$. We denote by $\prec$ the collection of all preference lists.

A \emph{matching} for a given \sri instance is a function
${M: A \mapsto A}$ such that, for all $\{x,y\} \subseteq A$ such that $x\in A_y$ and $y\in A_x$, $M(x)=y$ if and only if $M(y)=x$. If agent $x$ is mapped to itself, we then say he/she is \emph{single}.
%If an agent $x$ is not matched, we then say he/she is \emph{single}, denote it by $M(x) = x$.

A matching $M$ is {\em blocked} by a pair $\{x, y\} \subseteq A$ ($x\neq y$) if %the following hold:
\begin{itemize}
	\item[B1] both agents $x$ and $y$ are acceptable to each other,
	%\item[B2]
    \item[B2] $x$ is single with respect to $M$, or $y \prec_{x} M(x)$, and
	%\item[B3]
    \item[B3] $y$ is single with respect to $M$, or $x \prec_{y} M(y)$.
\end{itemize}
A matching for \sri is called {\em stable} if it is not blocked by any pair of agents.
Fig.~\ref{fig:sri7-4-8} illustrates three examples for \sri.

\begin{figure}[h]
\centering \vspace{-.5\baselineskip}
\resizebox{0.9\textwidth}{!}{\begin{tabular}{ccc}
\hline \hline
sri7 & sri4 & sri8 \\
\hline
$
\ba l
		a:\ b\ e\ d\ f\ g \\		
		b:\ c\ f\ a\ g\ e \\		
	    c:\ d\ g\ b\ e\ f\ a \\		
		d:\ a\ c\ e\ f\ g  \\		
		e:\ f\ a\ b\ c\ d \\		
		f:\ g\ b\ e\ c\ d\  a \\		
		g:\ c\ f\ d\ a\ b \\
\ea
$
&
$
\ba l
a:\ b\ c\ d \\
b:\ c\ a\ d \\
c:\ a\ b\ d \\
d:\ a\ b\ c \\
\ea
$
&
$
\ba l
		a:\ c\ e\ f\ g\ d\ h \\
		b:\ d\ f\ h\ c\ g \\	
		c:\ a\ b\ f\ h\ e\ d \\		
		d:\ h\ g\ e\ a\ b\ c \\		
		e:\ g\ c\ b\ d\ a\ f\\		
		f:\ e\ a\ g\ c\ h\ b\\		
		g:\ f\ h\ d\ b\ c \\		
		h:\ b\ d\ a\ e\ f \\				
\ea
$ \\
\hline \hline
\end{tabular}} \vspace{-.5\baselineskip}
\caption{Three \sri instances. sri7 has a single stable solution $\{\{a,b\}, \{c,d\}, \{f,g\}, \{e\}\}$. sri4 has no solutions, since each possible matching is blocked (e.g., $\{\{a,c\}, \{b,d\}\}$ is blocked by $\{a,b\}$). sri8 has two stable matchings, $M_1{:}\{\{a,c\}, \{b,h\}, \{d,e\}, \{f,g\}\}$ and $M_2{:}\{\{a,c\}, \{b,h\}, \{d,g\}, \{e,f\}\}$.}
\label{fig:sri7-4-8}
 \vspace{-.75\baselineskip}
\end{figure}

{\em The Stable Roommates problem (\sr)} is a special case of \sri where the preference orderings are strict and complete
(i.e., for every agent $x \in A$, $A_x {=} A \setminus \{x\}$), and $|A|$ is even.

\smallskip
\noindent {\em Ties.}
{\em The Stable Roommates problem with Ties and Incomplete Lists (\srti)} is a variation of \sri where the preference lists are partial orderings and where incomparability is transitive. In this context, ties correspond to indifference in the preference lists: an agent $x$ is {\em indifferent} between the agents $y$ and $z$, denoted by $y \sim_{x} z$, if $y \not \prec_{x} z$ and $z \not \prec_{x} y$. There are three levels of stability~\cite{irving2002}: {\em weak stability}, {\em strong stability}, and {\em super stability}. We will focus on {\em weak stability} in this paper,  since it is a harder problem compared to the other two versions (Table~\ref{tab:complexity}). Relative to weak stability, a pair $\{x, y\}$ of agents {\em blocks} a matching $M$ if conditions B1--B3 hold.

{\em The Stable Roommates problem with Ties (\srt)} is a special case of \srti where the preference ordering of each agent $x$ is over $A\setminus \{x\}$ and complete, and $|A|$ is even.

Note that, while the problems \sr and \sri are in P, \srt and \srti under weak stability are NP-complete (Table~\ref{tab:complexity}).

\smallskip \noindent {\em Fairness.}  When an \sri instance has many stable matchings, it may be useful to identify a stable matching that is fair to all agents. Different fairness criteria on top of stability have led to optimization variations of \sri.

Let $\mathcal{M}$ denote the set of all stable matchings of a given \sri instance $(A, \prec)$. For every agent $x$ and every agent $y\in A_x$, let $\ii{rank}(x,y)$ denote the rank of agent $y$ in the preference list $A_x$ of agent~$x$. We assume that agents prefer matching with a roommate: for every agent $x$, let $\ii{rank}(x,x)$ be a number larger than $\ii{rank}(x,y)$ for every $y\in A_x$.

{\em Egalitarian \sri} aims to maximize the total satisfaction of preferences of all agents. Let $M$ be a matching. For every agent $x$, we define the satisfaction $c_M(x)$ of $x$'s preferences with respect to $M$ as follows: $c_M(x) {=} I$ if $\ii{rank}(x,M(x)){=}I$. Then the total satisfaction of preferences of all agents is defined as follows: $c(M) = \sum_{x\in A} c_M(x).$ Note that for \sri, all matching $M$ have the same number of contributions of $\ii{rank}(x,x)$ values to $c(M)$.
Since the preferred agents have lower rankings, the total satisfaction of preferences of all agents is maximized when $c(M)$ is minimized.  Then, a matching $M {\in} \mathcal{M}$ with the minimum $c(M)$ is {\em egalitarian.}

{\em Rank Maximal \sri} considers different fairness criterion: it aims to maximize the number of agents matched with their first preferences, and then, subject to this condition, to maximize the number of agents matched with their second preference, and so on.
We start with the set $\mathcal{M}$ of all matchings of a given \sri instance $(A, \prec)$, and define a series of subsets $\mathcal{M}_{max}(i)$ of these matchings where the maximum number of agents are matched with their $i$'th preferences:
$$
\ba l
\hspace{-1ex}\mathcal{M}_{max}(0) = \mathcal{M} \\
\hspace{-1ex}\mathcal{M}_{max}(i) = \{M {\in} \mathcal{M}_{max}(i{-}1): 1 {\leq} i {\leq} |A|{-}1,\ \forall\ M' {\in} \mathcal{M}_{max}(i{-}1)\ \textrm{s.t.}\ M' {\neq} M\\
\qquad |\{x{\in}A: \ii{rank}(x{,}M(x)){=}i, x \neq M(x)\}| \geq |\{x{\in}A: \ii{rank}(x{,}M'(x)){=}i, x \neq M(x)\}| \}.
\ea
$$
Then, a matching $M \in \mathcal{M}_{max}(|A|-1)$ is {\em rank-maximal}.

Consider the \sri instance sri8 illustrated in Fig.~\ref{fig:sri7-4-8}, with two stable matchings.
Stable matching $M_1$ is egalitarian
%(where $c(M_1)=34$)
and $M_2$ is rank-maximal.
%(where the number of agents with  $-4,-1,-2,0,0,-1,0$).

\smallskip \noindent {\em Almost stable.}
Unlike the Stable Marriage problem, there is no guarantee to find a solution to every \sri problem instance (cf. sri4 in Fig.~\ref{fig:sri7-4-8}).  When an \sri instance does not have a stable matching, further variations of \sri have been studied to find a good-enough solution.

{\em Almost \sri} aims to minimize the total number of blocking pairs.
Let $bp_M(x,y)$ denote the set of blocking pairs of a given matching $M$. A matching $M{\in}\mathcal{M}$ is {\em almost stable} if it is blocked by the minimum number $|bp_M(x,y)|$ of pairs.

%------------------------------------------------------------------------

\section{Answer Set Programming}\label{sec:asp}

\srtiasp utilizes Answer Set Programming (ASP)~\cite{BrewkaEL16} to declaratively solve stable roommates problems. The idea of problem solving with ASP is (1) to represent the given problem by a program whose answer sets~\cite{GelfondL88,GelfondL91} characterize the solutions of the problem, and (2) to solve the problem using answer set solvers, like \clingo~\cite{GebserKKOSS11}.

{\em Why ASP?} We use ASP as an underlying paradigm for modeling and solving stable roommates problems for the following reasons. (1) Deciding whether a program in ASP has an answer set is NP-complete~\cite{dantsin2001}, so ASP is expressive enough for solving hard \sr problems. (2) ASP has expressive languages with a rich set of utilities, such as nondeterministic choices, hard constraints, weighted weak constraints with priorities,
and thus allow us to easily formulate different variations of \sr. (3) Efficient ASP solvers, like \clingo, supports these utilities. (4) Such an elaboration tolerant~\cite{mccarthy98} representation framework and flexible software environment are useful in studying and understanding variations of \sr in different applications. (5) Due to declarative problem solving in the formal framework of ASP, we can easily prove the soundness and completeness of \srtiasp (see Theorem~\ref{thm:correct}).

{\em Programs in ASP} Let us briefly describe the syntax of programs and useful constructs used in the paper.
We consider ASP programs that consist of rules of the form
\beq
\ii{Head} \lar A_1, \dots, A_m, \no\ A_{m+1}, \dots, \no\ A_n.
\eeq{eq:rule}
where $n \geq m \geq 0$, \ii{Head} is an atom or~$\bot$, and each $A_i$ is an atom.
A rule is called a
\textit{fact} if $m=n=0$ and a \textit{(hard) constraint} if \ii{Head} is~$\bot$.

{\em Cardinality expressions} are special constructs of the form
$l \{A_1,\dots,A_k\} u$
where each $A_i$ is an atom and $l$ and $u$ are
nonnegative integers denoting the lower and upper bounds~\cite{simons02}.
Programs using these constructs can be viewed as abbreviations for programs that consist
of rules of the form~(\ref{eq:rule}). Such an expression describes the subsets of the set
$\{A_1,\dots,A_k\}$ whose cardinalities are at least $l$ and at most~$u$.
Cardinality expressions can be used in heads of rules; then they generate many answer sets whose cardinality is
at least $l$ and at most $u$.
%For instance, the answer sets for the program
%$2 \{p_1,...,p_7\} 4 \lar$
%are subsets of the set $\{p_1,...,p_7\}$ whose cardinality is at least 2 and at most~4.

{\em Schematic variables} A group of rules that follow a pattern can be often described in a
compact way using ``schematic variables''. For instance,
the cardinality expression $1\{p_1,\dots,p_7\}1$ can be represented as
$1\{p(i):\ii{index}(i)\}1$, along with a definition of $\ii{index}(i)$ that describes the ranges of variables: $\ii{index}(1..7)$.

{\em Weighted weak constraints with priorities} The ASP programs can be augmented with ``weak constraints''---expressions of the following form~\cite{BuccafurriLR00}:
$$\mathop{\leftarrow}^{\sim} \ii{Body}(t_1,...,t_n) [w@p,t_1,...,t_n] .$$
Here, $\ii{Body}(t_1,...,t_n)$ is a formula (as in the body of a rule) with the terms $t_1,...,t_n$. Intuitively, whenever an answer set for a program satisfies $\ii{Body}(t_1,...,t_n)$, the tuple $\langle t_1,...,t_n \rangle$ contributes a cost of $w$ to the total cost function of priority $p$. The ASP solver tries to find an answer set with the minimum total cost. For instance, the following weak constraint
$$
\mathop{\leftarrow}^{\sim} p(i), p(i+1), index(i), index(i+1) [1@2,i]
$$
instructs \clingo\ to compute an answer set that does not include both $p(i)$ and $p(i+1)$, if possible. However, if \clingo\ cannot find such an answer set, it is allowed to compute an answer set with these atoms $p(i)$ and $p(i+1)$ but with an additional cost of 1 per each such $i$. Weak constraints are considered by \clingo\ according to their priorities.

%------------------------------------------------------------------------
\section{Solving \sri using ASP}\label{sec:sriasp}

We formalize the input $I=(A,\prec)$ of an \sri instance in ASP by a set $F_I$ of facts using atoms of the forms $\ii{agent}(x)$ (``$x$ is an agent in $A$'') and $\ii{prefer2}(x,y,z)$ (``agent $x$ prefers agent $y$ to agent $z$, i.e., $y \prec_{x} z$ ''). For instance, the preference list of agent $a$ in sri4 of Fig.~\ref{fig:sri7-4-8} is described by the following facts:
$\ii{prefer2}(a,b,c).\ \ii{prefer2}(a,c,d).$

For every agent $x$, since $x$ prefers being matched with a roommate $y$ in $A_x$ instead of being single, for every $y\in A_x$, we also add facts of the form $\ii{prefer2}(x,y,x)$. For the example above, the input also includes the facts:
$\ii{prefer2}(a,b,a).\ \ii{prefer2}(a,c,a).\ \ii{prefer2}(a,d,a). $

In the ASP formulation $P$ of \sri, the variables $x$, $y$, $z$ and $w$ denote agents in $A$. The program~$P$ starts with the definition of preferences of agents with respect to $\prec_{x}$:
\beq
\ba l
\ii{prefer}(x,y,z) \lar \ii{prefer2}(x,y,z). \\
\ii{prefer}(x,y,z) \lar \ii{prefer2}(x,y,w), \ii{prefer}(x,w,z).
\ea
\eeq{eq:P-pref}
The first rule expresses that being single is the least preferred option.
The second rule expresses that the preference relation is transitive.

Based on the preferences of agents, we define the concept of acceptability for each agent:
\beq
\ba l
\ii{accept}(x,y) \lar \ii{prefer}(x,y,\_). \\
\ii{accept}(x,y) \lar \ii{prefer}(x,\_,y).
\ea
\eeq{eq:P-accept}
and the concept of mutual acceptability:
\beq
\ii{accept2}(x,y) \lar \ii{accept}(x,y), \ii{accept}(y,x).
\eeq{eq:P-accept2}

The output $M: A \mapsto A$ of an \sri instance is characterized by atoms of the form $\ii{room}(x,y)$ (``agents $x$ and $y$ are roommates'').  The ASP formulation $P$ of \sri first generates pairs of roommates. For every agent $x$, exactly one mutual acceptable agent $y$ is nondeterministically chosen as $M(x)$ by the choice rules:
\beq
\ba l
1\{\ii{room}(x,y){:} \ii{agent}(y), \ii{accept2}(x,y)\}1 \lar \ii{agent}(x).
\ea
\eeq{eq:P-room}
Here, the roommate relation is symmetric:
\beq
\lar \ii{room}(x,y), \no\ \ii{room}(y,x).
\eeq{eq:P-room2}
The agents who are not matched with a roommate are single agents:
\beq
\ii{single}(x) \lar \ii{room}(x,x).
\eeq{eq:P-single}

Then, the stability of the generated matching is ensured by the hard constraints:
\beq
\lar \ii{block}(x,y) \qquad (x\neq y) .
\eeq{eq:P-stable}
Here, atoms of the form $\ii{block}(x,y)$ describe the blocking pairs (i.e., conditions B1--B3):
\beq
\ba l
\ii{block}(x,y) \lar \ii{accept2}(x,y), \ii{single}(x), \ii{single}(y), \no\ \ii{room}(x,y).  \quad (x\neq y)\\
\ii{block}(x,y) \lar \ii{accept2}(x,y), \ii{single}(x), \ii{like}(y,x), \no\ \ii{room}(x,y).  \quad (x\neq y) \\
\ii{block}(x,y) \lar \ii{accept2}(x,y), \ii{like}(x,y), \ii{single}(y), \no\ \ii{room}(x,y).  \quad (x\neq y) \\
\ii{block}(x,y) \lar \ii{accept2}(x,y), \ii{like}(x,y), \ii{like}(y,x), \no\ \ii{room}(x,y).  \quad (x\neq y) \\
\ea
\eeq{eq:P-block}
where $x{\neq}y$ in each rule, and atoms $\ii{like}(x,y)$ describe that agent $x$ prefers agent $y$ to her/his roommate~$x'=M(x)$:
\beq
\ii{like}(x,y) \lar \ii{room}(x,x'), \ii{prefer}(x,y,x'). \quad (x'\neq y)
\eeq{eq:P-like}

Given the ASP formulation $P$ whose rules are described above and the ASP description $F_I$ of an \sri instance $I$, the ASP solver \clingo generates a stable matching (or all stable matchings), if one exists; otherwise, it returns that there is no solution.
%\clingo can also generate all stable matchings.
This is possible since the ASP program~$P$ (i.e., (\ref{eq:P-pref})--(\ref{eq:P-like})) is sound and complete.

\newcommand{\correctness}{
Given an \sri instance $I=(A,\prec)$, for each answer set~$S$ for
$P \cup F_I$, the set of atoms of the form $\ii{room}(x,y)$ in
$S$ encodes a stable matching $M: A\mapsto A$ to the \sri
problem instance. Conversely, each stable matching for the given \sri instance
corresponds to a single answer set for $P \cup F_I$.
}

\begin{theorem}\label{thm:correct}
\correctness
\end{theorem}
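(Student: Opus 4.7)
The approach is to prove the biconditional in two directions, exploiting the fact that the program $P$ decomposes cleanly into a fully deterministic ``definition'' part (rules (\ref{eq:P-pref})--(\ref{eq:P-accept2}) together with (\ref{eq:P-single}), (\ref{eq:P-like}), and (\ref{eq:P-block})) and a nondeterministic ``generate'' part consisting of the choice rule (\ref{eq:P-room}) with the hard constraints (\ref{eq:P-room2}) and (\ref{eq:P-stable}). I would first establish, using the splitting theorem for answer sets applied bottom-up, that in every answer set $S$ of $P \cup F_I$ the atoms $\ii{prefer}(x,y,z)$ are exactly the pairs for which $y$ strictly precedes $z$ in $x$'s preference list extended by appending $x$ as the least preferred element, that $\ii{accept}(x,y)$ holds iff $y \in A_x \cup \{x\}$, and that $\ii{accept2}(x,y)$ captures mutual acceptability. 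These facts depend only on $F_I$ and are therefore fixed across all answer sets.

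For soundness, given an answer set $S$ define $M(x)=y$ iff $\ii{room}(x,y)\in S$. The choice rule (\ref{eq:P-room}) forces exactly one such $y$ per $x$, so $M$ is a function into $A_x \cup \{x\}$; constraint (\ref{eq:P-room2}) enforces $M(x)=y \Leftrightarrow M(y)=x$, so $M$ is a matching with $\ii{single}(x)$ corresponding to $M(x)=x$ by (\ref{eq:P-single}). It then remains to argue that constraint (\ref{eq:P-stable}) rules out exactly the matchings containing a blocking pair. The key sub-lemma is that rules (\ref{eq:P-block}) together with (\ref{eq:P-like}) derive $\ii{block}(x,y)$ precisely when $\{x,y\}$ satisfies B1--B3 relative to $M$: the literal $\ii{accept2}(x,y)$ encodes B1; the four rules exactly enumerate the four combinations arising from B2 ($\ii{single}(x)$ or $\ii{like}(x,y)$) and B3 ($\ii{single}(y)$ or $\ii{like}(y,x)$); and $\no\ \ii{room}(x,y)$ excludes the trivial case where $x$ and $y$ are already matched to each other.

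For completeness, given a stable matching $M$, I would construct a candidate $S$ by taking the deterministic closure of $F_I$ under (\ref{eq:P-pref})--(\ref{eq:P-accept2}), adding $\ii{room}(x,y)$ whenever $M(x)=y$, and then closing under (\ref{eq:P-single}), (\ref{eq:P-like}), and (\ref{eq:P-block}). One then verifies that $S$ is a minimal model of the Gelfond--Lifschitz reduct $(P\cup F_I)^S$: the choice rule is satisfied because $M$ picks exactly one image per agent and mutual acceptability is guaranteed for every matched pair (with $\ii{accept2}(x,x)$ holding whenever $A_x \neq \emptyset$); the symmetric constraint holds because $M$ is symmetric; and (\ref{eq:P-stable}) holds because the stability of $M$ prevents the derivation of any $\ii{block}(x,y)$. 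Minimality follows from the absence of recursion through negation in $P$, which guarantees that no atom can appear in $S$ without a supporting rule whose positive body is already contained in $S$.

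The main obstacle I anticipate is the block-rule case analysis, particularly the interplay between $\ii{single}$ and $\ii{like}$: when $x$ is single one has $M(x)=x$, so $\ii{prefer}(x,y,x)$ holds for every $y\in A_x$ and hence $\ii{like}(x,y)$ is derivable in that case as well. This means the four block rules have overlapping coverage, and I must check that the overlap is harmless, namely that no spurious $\ii{block}(x,y)$ is derived for a pair that does not actually block $M$, and conversely that every genuine blocking pair is captured by at least one of the four rules. A secondary but more routine obstacle is to justify the bottom-up splitting argument rigorously enough to conclude that the deterministic fragment is identical across all answer sets, so that the bijective correspondence claimed in the theorem is genuinely one-to-one rather than one-to-many.
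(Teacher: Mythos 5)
Your proposal is correct and follows essentially the same route as the paper's proof: isolate the deterministic definitional layer (\ii{prefer}, \ii{accept}, \ii{accept2}, and later \ii{single}, \ii{like}, \ii{block}) from the nondeterministic choice rule~(\ref{eq:P-room}) via splitting, and then let the constraints~(\ref{eq:P-room2}) and~(\ref{eq:P-stable}) carve out exactly the stable matchings, yielding the one-to-one correspondence. The differences are presentational---you verify the completeness direction directly through the Gelfond--Lifschitz reduct where the paper invokes the splitting set theorem and the Erdogan--Lifschitz propositions on definitions and constraints, and your explicit check that the overlap between the \ii{single} and \ii{like} cases in the rules~(\ref{eq:P-block}) is harmless spells out a detail the paper leaves implicit.
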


%\noindent{\em Proof sketch} First, we show that each answer set for (\ref{eq:P-pref})--(\ref{eq:P-room2}) characterizes a matching, using the Splitting Set Theorem~\cite{Erdogan2004,LifschitzT94}. Next, we show that each definition (\ref{eq:P-single}), (\ref{eq:P-like}), and (\ref{eq:P-block}) conservatively extends these answer sets, using Proposition~3 of~\cite{Erdogan2004}. Finally, we show that stability is guaranteed by (\ref{eq:P-stable}), using Proposition~2 of~\cite{Erdogan2004}. For the proof, see Appendix~\cite{appendix}.

\begin{proof}
First, we show that the answer set for (\ref{eq:P-pref})--(\ref{eq:P-accept2}) correctly describes the acceptability relation.
\begin{enumerate}[(i)]
\item
Due to Proposition~4 of Erdem and Lifschitz~\citeyear{ErdemL03} about the correctness of the transitive closure definition, the answer set $X_0$ for (\ref{eq:P-pref}) correctly defines preferences of each agent $x$ (by means of atoms of the $\ii{prefer}(x,y,z)$).
%: for each $x$, take $\ii{tc}(y,z)$ as $\ii{prefer}(x,y,z)$ and $p(y,z)$ as $\ii{prefer2}(x,y,z)$.
\item
Due to Proposition~3 of Erdogan and Lifschitz~\citeyear{Erdogan2004}, adding (\ref{eq:P-accept}) to (\ref{eq:P-pref}) conservatively extends $X_0$ to $X_1$, which also describes the preference lists for each agent $x$ (by means of atoms of the $\ii{accept}(x,y)$).
\item
Due to Proposition~3 of Erdogan and Lifschitz~\citeyear{Erdogan2004}, adding (\ref{eq:P-accept2}) to $(\ref{eq:P-pref}) \cup (\ref{eq:P-accept})$ conservatively extends $X_1$ to $X$.
\item
The answer set $X$ describes the acceptability of every pair of agents $x$ and $y$ to each other (by means of atoms of the $\ii{accept2}(x,y)$).
\end{enumerate}

Next, we show that the answer set for (\ref{eq:P-pref})--(\ref{eq:P-room2}) correctly characterizes a matching.
\begin{enumerate}[(i)]
\item
We use the splitting set theorem~\cite{Erdogan2004,LifschitzT94}: Let $\Pi$ be the program (\ref{eq:P-pref})--(\ref{eq:P-room}), and the splitting set $U$ be the set of atoms of the form $\ii{prefer}(x,y,z)$, $\ii{accept}(x,y)$, and $\ii{accept2}(x,y)$.
\item
The bottom $b_U(\Pi)$ consists of (\ref{eq:P-pref})--(\ref{eq:P-accept2}), and the top part consists of rules (\ref{eq:P-room}).
\item
Every answer set $Y$ for the top part  $e_U(\Pi \setminus b_U(\Pi), X)$ evaluated with respect to $X$, describes a function, via atoms of the form $\ii{room}(x,y)$, which maps every agent $x$ to exactly one agent $y$ so that $x$ and $y$ are acceptable to each other (when $\ii{accept2}(x,y)\in X$).
Moreover, every such mapping can be characterized by a unique answer set for $e_U(\Pi \setminus b_U(\Pi), X)$.
\item
According to the splitting set theorem, $X\cup Y$ is an answer set for (\ref{eq:P-pref})--(\ref{eq:P-room}).
\item
Then, $X\cup Y$ describes a mapping between acceptable pairs of agents. The symmetry of this mapping is guaranteed by (\ref{eq:P-room2}), using Proposition~2 of Erdogan and Lifschitz~\citeyear{Erdogan2004}, leading to a matching.
\item
Therefore, there is a one-to-one correspondence between the answer sets for (\ref{eq:P-pref})--(\ref{eq:P-room2}) and the matchings between acceptable pairs of agents.
\end{enumerate}

Next, using Proposition~3 of Erdogan and Lifschitz~\citeyear{Erdogan2004} three times, we show that adding definitions~(\ref{eq:P-single}), (\ref{eq:P-like}), and (\ref{eq:P-block}) to (\ref{eq:P-pref})--(\ref{eq:P-room2}) one by one conservatively extends the answer sets for (\ref{eq:P-pref})--(\ref{eq:P-room2}), by describing singles (by means of atoms of the $\ii{single}(x)$), preferences $y$ of every agent $x$ over her/his roommate (by means of atoms of the $\ii{like}(x,y,z)$), and then blocking pairs (by means of atoms of the $\ii{block}(x,y)$).

Finally, we show that stability is guaranteed by (\ref{eq:P-stable}), using Proposition~2 of Erdogan and Lifschitz~\citeyear{Erdogan2004}. Therefore, there is a one-to-one correspondence between every answer set $S$ for (\ref{eq:P-pref})--(\ref{eq:P-stable}) and every stable matching.
\end{proof}

%\smallskip
\noindent {\em Ties (weak stability).}
As noted by Cseh~\citeyear{Cseh2019}, relative to weak stability,
stability in \srti instances can be defined in exactly the same way as for \sri.
Therefore, we can use the \sri formulation $P$ to solve \srti instances too.

\smallskip \noindent {\em Fairness.}
Let us describe the ranks of agents by a set of facts using atoms of the form $\ii{rank}(x,y,i)$ (``the rank of agent $y$ according to agent $x$'s preferences is $i$''). For each agent $x$, since $x$'s preference ordering $\prec_x$ is total,
we can define the ranks as follows: the $i$'th agent in the preference list $A_x$ of $x$ has rank $i$, and $x$ has a rank larger than the ranks of $i$.
$$
\ba l
\ii{rank}(x,b,1) \lar \#\ii{count}\{a:\ \ii{prefer}(x,a,b), a\neq b\}{=}0, \ii{accept}(x,b) \\ %\qquad (x\neq b)\\
\ii{rank}(x,b,i) \lar \ii{rank}(x,a,i{-}1), \ii{prefer}(x,a,b), \\
\qquad \#\ii{count}\{c: \ii{prefer}(x,a,c), \ii{prefer}(x,c,b), c\neq a, c\neq b\}{=}0 \quad (a\neq b, i>1) %, x\neq b)
\ea
$$

Egalitarian \sri aims to maximize the total satisfaction of preferences of all agents by a matching $M$. The satisfaction $c_M(x)$ of an agent $x$'s preferences with respect to $M$ is defined as the rank of $M(x)$.
Since more preferred agents have lower ranks, the total satisfaction of preferences of all agents is maximized when $c(M){=}\sum_{x\in A} c_M(x)$ is minimized. Therefore, to solve Egalitarian \sri, we simply add to the \sri formulation $P$, the weighted weak constraints:
$$
\mathrel{\mathop{\lar}^{\sim}}  \ii{room}(x,y), \ii{rank}(x,y,r).\  [r@1,x]  %\qquad (x\neq y)
$$
which instruct \clingo to minimize the sum of the ranks $r$ of roommates.

Rank Maximal \sri tries to maximize the number of agents matched with their first preferences, and, subject to this condition, tries to maximize the number of agents matched with their second preferences, and so on.  Such an iterative definition can be modeled elegantly by the following weak constraints:
$$
\mathrel{\mathop{\lar}^{\sim}} \ii{room}(x,y), \ii{rank}(x,y,r).\ [-1@|A|-r,x,y] \qquad (x\neq y)
$$
Note that the priorities of these weak constraints are defined as $|A|-r$ for every pair of roommates $\{x,y\}$ ($x\neq y$), where $\ii{rank}(x,y)=r$.
As the rank changes from $1$ to $|A|-1$, the priority decreases. The ASP solver \clingo handles weak constraints with respect to their priorities. On the other hand, note that the weights of these weak constraints are specified as -1 for every pair of roommates $\{x,y\}$ ($x\neq y$). Therefore,  \clingo first considers the highest priority $|A|-1$, and tries to minimize the total weights of agents matched with their first preferences.
Then, \clingo considers the next highest priority $|A|-2$, and further tries to minimize the total weights of agents matched with their second preferences, and so on. In this way, \clingo finds a rank maximal stable matching.

\smallskip \noindent {\em Almost stable.}
Almost \sri aims to minimize the total number of blocking pairs for a matching~$M$. For that, we simply replace the hard constraint~(\ref{eq:P-stable}) that ensures stability, with the following weak constraints in our ASP formulation $P$ of \sri:
$$
\mathop{\leftarrow}^{\sim}  \ii{block}(x,y).\ [1@1,x,y] \quad (x\neq y)
$$

\noindent {\em Elaboration tolerance.} According to McCarthy~\citeyear{mccarthy98}, a representation is elaboration tolerant to the extent that it is convenient to modify a set of formulas expressed in the formalism to take into account new phenomena, and the simplest kind of elaboration is the addition of new formulas. In that sense, our representation $P$ of \sri is elaboration tolerant to variations of \sri, since the program $P$ is not changed at all (e.g., we add new rules to $P$ for Egalitarian \sri) or it is changed minimally (e.g., we replace a hard constraint by a weak constraint for Almost \sri).

%------------------------------------------------------------------------
\section{Experimental Evaluations}\label{sec:exp}

We have experimentally evaluated \srtiasp to understand its scalability over intractable \srti problems, and how it compares with two closely related methods over tractable \sr problems.

\smallskip \noindent {\em Setup.}
We have generated instances using the random instance generator that comes with \sricp.  It is based on the following idea~\cite{Mertens2005}: 1) generate a random graph ensemble $G(n,p)$ according to the Erdos-Renyi model~\cite{Erdos60}, where $n$ is the required number of agents and $p$ is the edge probability (i.e., each pair of vertices is connected independently with probability $p$); 2) since the edges characterize the acceptability relations, generate a random permutation of each agent’s acceptable partners to provide the preference lists. We define the {\em completeness degree} for an instance as the percentage $p*100$.

In our experiments, we have used \clingo (Version 5.2.2) on a machine with Intel Xeon(R) W-2155 3.30GHz CPU and 32GB RAM.

\begin{table}[t]
%\vspace{-.2\baselineskip}
%\centering
%\resizebox{\columnwidth}{!}{%
\caption{Scalability of \srtiasp in computation time, for  \sri, Egalitarian (E) \sri, Rank Maximal (R) \sri, and Almost (A) \sri.}
\label{tab:sri-opt}
\begin{minipage}{\textwidth}
\begin{tabular}{ccccccccc}
\hline \hline
            &       & \multicolumn{4}{c}{\sri}                        & E \sri    & R \sri    & A \sri \\
            &       & \#instances & average & \#instances & average   & average   & average   & average \\
completeness&       & with a      & time    & without any & time      & time      & time      & time \\
degree      & $|A|$ & solution    & (sec)   & solution    & (sec)     & (sec)     & (sec)     & (sec) \\
   \hline
25\%
 %& 20 & 20 & 0.009 & NA & NA & 0.010 &  0.010 &  NA   \\ \cline{2-9}
 & 40 & 11 & 0.029 & 9 & 0.024 & 0.204 &  0.201 & 0.017   \\
 & 60 & 10 & 0.068 & 10 & 0.077 & 0.129 &  0.219 &  0.479   \\
 & 80 & 13 & 0.167 & 7 & 0.183 & 0.255 & 0.256 &  4.416  \\
 & 100 & 14 & 0.37 & 6 & 0.442 & 0.575 &  0.602 & 85.86 \\
 & 150 & 8 & 1.995 & 12 & 1.994 & 14.126  & 17.703 &  TO    \\
 & 200 & 10 & 8.794 & 10 & 8.323 & 59.18 &  83.85 &   TO  \\
\hline % \noalign{\vspace {.5cm}}
50\%
 %& 20 & 17 & 0.014 & 3 & 0.011 &  0.017 & 0.014 & 0.010  \\ \cline{2-9}
 & 40 & 11 & 0.06 & 9 & 0.084 & 0.106 &  0.108 & 0.136 \\
 & 60 & 16 & 0.276 & 4 & 0.381 & 0.535 & 0.56 & 3.748  \\
 & 80 & 13 & 0.852 & 7 & 1.106 & 1.82 & 1.85 &  3.748  \\
 & 100 & 12 &3.192 & 8 & 2.828 & 4.97 &  5.26 &  343.24  \\
 & 150 & 14 & 15.880 & 6 & 15.59 & 153.54 &  149.447 &   TO    \\
 & 200 & 9 & 69.65 & 11 & 72.08 & 524.58 & 704.3  &     TO  \\
\hline % \noalign{\vspace {.5cm}}
75\%
 %& 20 & 18 & 0.028 & 2 & 0.020 & 0.034 & 0.033 &  0.018    \\ \cline{2-9}
 & 40 & 14 & 0.136 & 6 & 0.207 & 0.722 & 0.33 & 0.486     \\
 & 60 & 13 & 0.744 & 7 & 0.997 & 8.171 & 1.85 & 15.526  \\
 & 80 & 8 & 3.971 & 12 & 3.346 & 6.99 &  7.16 &  144.80  \\
 & 100 & 13 & 11.06 & 7 & 8.801 & 19.04 &  20.65 & 885.39   \\
 & 150 & 9 & 50.520 & 11 & 51.700 & 492.7 & 529.16 &  TO  \\
 & 200 & 12 & 175.81 & 8 & 202.46 & 1757.0 &   1475.0 &  TO  \\
\hline % \noalign{\vspace {.5cm}}
100\%
 %& 20 & 15 & 0.024 & 5 & 0.032 & 0.047 &  0.041 &  0.023  \\ \cline{2-9}
 & 40 & 15 & 0.227 & 5 & 0.301 & 0.463 & 0.56 &  0.675   \\
 & 60 & 14 & 1.483 & 6 & 2.110 & 2.534 &  3.372 &  27.935   \\
 & 80 & 13 & 7.472 & 7 & 7.426 & 14.39 &  15.47 & 181.09 \\
 & 100 &10  & 24.43 & 10 & 16.268 & 35.92 &  40.42 &  2627.77 \\
 & 150 & 11 & 112.23 & 9 & 113.21 &360.7 &  362.12 &   TO  \\
 & 200 & 12 & 388.58 & 8 & 353.8 & 844.03 &  1147.0  &  TO  \\  \hline \hline
\end{tabular}%
 \\
 \footnotesize{TO: Timeout (over 3000 seconds)}
 \vspace{-.6\baselineskip}
\end{minipage}
\end{table}

\smallskip \noindent {\em Scalability of \srtiasp: \sri and its variations.}
We have generated instances of different sizes, where the number of agents are 20, 40, 60, 80,100, 150 and 200, and the completeness degrees are 25\%, 50\%, 75\% and 100\%. For each number of agents and for each completeness degree, we have generated 20 instances. We have experimented with these randomly generated instances to analyze the scalability of \srtiasp for \sri, Egalitarian \sri, Rank Maximal \sri, and Almost \sri. The results are shown in Table~\ref{tab:sri-opt}. We make the following observations from this table:

\vspace{-0.5ex}\begin{itemize}
\item[O1] The computation times for finding a stable matching (if one exists) and finding out that there exists no stable matching are comparable to each other.
\end{itemize} \vspace{-0.5ex}

\noindent Consider, for instance, the completeness degree 25\%, and 80 agents. For 13 (out of 20) instances, the average CPU time to compute a stable matching is 0.167 seconds. For the remaining 7 instances, the average CPU time to find that a stable matching does not exist is 0.183 seconds. These timings are comparable to each other.

\vspace{-0.5ex}\begin{itemize}
\item[O2] Computing an egalitarian stable matching generally takes slightly less time than computing a rank maximal stable matching.
\end{itemize} \vspace{-0.5ex}

\noindent For instance, for the 13 instances with a stable matching, computing egalitarian stable matchings takes on average 0.255 seconds; computing rank maximal stable matchings takes a similar amount of time, 0.256 seconds. For larger instances, we can observe that the latter takes a bit more time.

\vspace{-0.5ex}\begin{itemize}
\item[O3] Computing an almost stable matching significantly takes more time, compared to computing an egalitarian or a rank maximal stable matching.
    \vspace{-0.5ex}
\item[O4] Solving the optimization variants of \sri takes significantly more time, compared to solving \sri.
\end{itemize} \vspace{-0.5ex}

\noindent For instance, for the 7 instances without any stable matching, computing almost stable matchings takes in 4.416 seconds on average.

The observations O2 and O3 are interesting, considering all the three optimization variations of \sri are NP-hard. Though, the observation O4 (better illustrated in Fig.~\ref{fig:plot-sri}) is not surprising, considering that \sri is in P (Table~\ref{tab:complexity}).

We can can further observe the following about scalability:

\vspace{-0.5ex}\begin{itemize}
\item[O5] As the completeness degree increases, the computation times increase.
\vspace{-0.5ex}
\item[O6] As the number of agents increases, the computation times increase.
\end{itemize} \vspace{-1ex}

\begin{figure}
	\centering
%\resizebox{\columnwidth}{!}{\begin{tabular}{lr}
%	{\includegraphics{sri-plot.png}} &
    \resizebox{0.6\columnwidth}{!}{\includegraphics{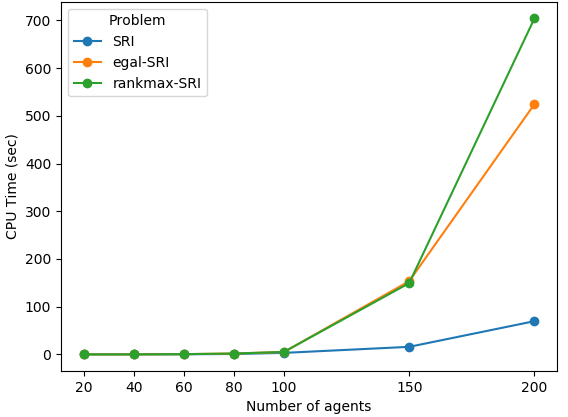}}
%\end{tabular}
%}
	\vspace{-.6\baselineskip}
	\caption{%(a) Scalability of \sri as the number of agents and the degree of completeness of preferences increase. (b)
% (b)
Comparison of \sri with its optimization variants, when the completeness degree is 50\%.}
\label{fig:plot-sri}
	\vspace{-\baselineskip}
\end{figure}

\smallskip \noindent {\em Scalability of \srtiasp: \sri vs. \srti.}
To experiment with \srtiasp on \srti instances (under weak stability), we have randomly generated ties for the randomly generated \sri instances with the completeness degree $25\%$. For each agent $x$, we have 1) identified the set $T$ of agents that are not acceptable to $x$ and vice versa, and randomly picked one of these agents, say $y$, 2) identified the set $U$ of agents that are acceptable to $x$ and vice versa, and randomly picked one of these agents, say $z$, and 3) added $y$ in preference list of $x$ so that $x$ is indifferent between $y$ and~$z$. We have added ties as many as $25\%, 50\%, 75\%, 100\%$ of the number of agents. The results of these experiments are shown in Table~\ref{tab:sri-srti}. We can observe the following:

\vspace{-0.5ex}\begin{itemize}
\item[O7] Solving \srti takes significantly more time, compared to solving \sri.
\vspace{-0.5ex}
\item[O8] \sri instances that do not have any stable matching, often have stable matchings after ties are added.
\end{itemize} \vspace{-0.5ex}

\noindent The observation O7 is expected, since \srti is NP-complete whereas \sri is in P (Table~\ref{tab:complexity}).  O8 is reasonable since adding ties reduces the number of potential blocking pairs in general, and thus allows \srtiasp to explore more possibilities.

\begin{table}[t]
\centering
\caption{Scalability of \srtiasp\ (completeness degree of 25\%): \sri vs. \srti}
\label{tab:sri-srti}
\begin{minipage}{\textwidth}
\begin{tabular}{ccccccccccc}
\hline \hline
 & \multicolumn{2}{c}{\sri} & \multicolumn{8}{c}{\srti} \\
 & &  & \multicolumn{2}{c}{\%25 tie} & \multicolumn{2}{c}{\%50 tie} & \multicolumn{2}{c}{\%75 tie} & \multicolumn{2}{c}{\%100 tie} \\
 & \#inst. & avg  & \#inst.  & avg & \#inst. & avg & \#inst. & avg & \#inst. & avg \\
 & with & time  & with  & time & with & time & with & time & with & time \\
$|A|$ & solns. & (sec)  & solns.  & (sec) & solns. & (sec) & solns. & (sec) & solns. & (sec) \\
  \hline
 20 & 20 & 0.009 & 20 & 0.034 & 20 & 0.043 & 20 & 0.063 & 20 & 0.073 \\
 40 & 11 & 0.029 & 18 & 0.191 & 20 & 0.243 & 20 & 0.319 & 20 & 0.346 \\
 60 & 10 & 0.068 & 20 & 0.967 & 20 & 1.297 & 20 & 1.746 & 20 & 1.752 \\
 80 & 13 & 0.167 & 20 & 2.833 & 20 & 3.208 & 20 & 3.453 & 20 & 4.187 \\
 100 & 14 & 0.370 & 20 & 3.372 & 20 & 2.609 & 20 & 2.293 & 20 & 2.178  \\
 \hline \hline
\end{tabular}%

%\vspace{-.5\baselineskip}
\vspace{-\baselineskip}
\end{minipage}
\end{table}

\begin{table}[ht]
\centering
\caption{\srtiasp vs. \sricp}
\label{tab:asp-cp}
\begin{tabular}{ccccccc}
\hline \hline
            &         & \#instances & \multicolumn{2}{c}{\srtiasp}           & \multicolumn{2}{c}{\sricp} \\
completeness&         & with a      & \multicolumn{2}{c}{average time (sec)} & \multicolumn{2}{c}{average time (sec)} \\
degree      &\#agents & solution    & exists solution & no solution          & exists solution & no solution \\

\hline
25\%
 %& 20 & 20 & 0.009 & NA & 0.169 & NA \\ \cline{2-7}
 %& 40 & 11 & 0.029 & 0.024 & 0.210 & 0.195 \\ \cline{2-7}
 %& 60 & 10 & 0.068 & 0.077 & 0.260 & 0.250 \\ \cline{2-7}
 & 80 & 13 & 0.167 & 0.183 & 0.318 & 0.310 \\
 & 100 & 14 & 0.370 & 0.442 & 0.389 & 0.492 \\
 & 150 & 8 & 1.995 & 1.994 & 0.704 & 0.697 \\
 & 200 & 10 & 8.794 & 8.323 & 1.047 & 0.999 \\
\hline % \noalign{\vspace {.5cm}}
50\%
 %& 20 & 17 & 0.014 & 0.011 & 0.187 & 0.177 \\ \cline{2-7}
 %& 40 & 11 & 0.06 & 0.084 & 0.256 & 0.241 \\ \cline{2-7}
 %& 60 & 16 & 0.276 & 0.381 & 0.350 & 0.341 \\ \cline{2-7}
 & 80 & 13 & 0.852 & 1.106 & 0.481 & 0.475 \\
 & 100 & 12 & 3.192 & 2.828 & 0.674 & 0.646 \\
 & 150 & 14 & 15.88 & 15.59 & 1.267 & 1.211 \\
 & 200 & 9 & 69.65 & 72.08 & 1.940 & 1.960 \\
\hline % \noalign{\vspace {.5cm}}
75\%
 %& 20 & 18 & 0.028 & 0.020 & 0.198 & 0.192 \\ \cline{2-7}
 %& 40 & 14 & 0.136 & 0.207 & 0.303 & 0.283 \\ \cline{2-7}
 %& 60 & 13 & 0.744 & 0.997 & 0.451 & 0.433 \\ \cline{2-7}
 & 80 & 8 & 3.971 & 3.346 & 0.684 & 0.682 \\
 & 100 & 13 & 11.062 & 8.801 & 0.967 & 0.961 \\
 & 150 & 9 & 50.52 & 51.70 & 1.795 & 1.780 \\
 & 200 & 12 & 175.81 & 202.46 & 3.214 & 3.263 \\
\hline % \noalign{\vspace {.5cm}}
100\%
 %& 20 & 15 & 0.024 & 0.032 & 0.210 & 0.203 \\ \cline{2-7}
 %& 40 & 15 & 0.227 & 0.301 & 0.347 & 0.337 \\ \cline{2-7}
 %& 60 & 14 & 1.483 & 2.110 & 0.567 & 0.558 \\ \cline{2-7}
 & 80 & 13 & 7.396 & 7.426 & 0.883 & 0.912 \\
 & 100 & 10 & 24.435 & 16.268 & 1.190 & 1.175 \\
 & 150 & 11 & 112.23 & 113.21 & 2.593 & 2.674 \\
 & 200 & 12 & 388.58 & 353.8 & 5.074  & 5.061 \\ \hline \hline
\end{tabular}%
\vspace{-0.5\baselineskip}
\vspace{-\baselineskip}
\end{table}

\smallskip \noindent {\em Scalability of \srtiasp vs. \sricp.}
We have experimented with \sricp~\cite{sricp2020}, which utilizes the CP solver \choco (Version 2.1.5), on the \sri instances generated by \sricp's random instance generator. The results for 80--200 agents are shown in Table~\ref{tab:asp-cp}.

\vspace{-0.5ex}\begin{itemize}
\item[O9] For large \sri instances, \sricp performs significantly better than \srtiasp.
\end{itemize} \vspace{-0.5ex}

\noindent
This observation has led to the following idea (mentioned by Prosser~\citeyear{Prosser2014}) for \sri instances that have stable matchings: ``Can we solve Egalitarian \sri faster than \srtiasp, by first enumerating all stable matchings using \sricp, and then finding the optimal one?''  We have noticed that the instances (generated by the random instance generator of \sricp) generally have one or two stable matchings. In that case, the answer to this question is Yes. This observation contradicts with the theoretical result on the NP-hardness of Egalitarian \sri. So we have generated some instances with more stable matchings.
%For example, for an instance (sri60) with 60 agents and with 59049 stable matchings, \srtiasp takes 0.663 seconds to find an egalitarian stable matching. \sricp takes 0.547 seconds to generate all these matchings, but finding the optimal one takes 2.133 seconds using Python 2.7's min function.
For example, for an instance (sri90) with 90 agents and with more than 9 million stable matchings, \srtiasp takes 1.75 seconds to find an egalitarian stable matching whereas \sricp can not enumerate all these solutions (due to fast consumption of memory). For \sri instances with many stable matchings, it may be better to use \srtiasp to solve Egalitarian \sri; further investigations are planned as part of our future work.

Meanwhile, we have investigated a similar question for \sri instances that do not have any stable matching: ``Can we solve Almost \sri instances with $n$ agents faster than \srtiasp, by checking whether removing ${n \choose 2}, {n \choose 4},..$ agents (i.e., potential blocking pairs) leads to a stable matching?'' We have observed that, for small \sri instances with one blocking pairs, the answer to this question is Yes: If we remove two agents, then we can find a stable matching. For larger instances with many blocking pairs, the answer is negative. For example, for an instance (sri60a) with 60 agents, that does not have any stable matching, we have observed that \srtiasp finds an almost stable matching with 10 blocking pairs in 9.057 seconds. With the enumerate-test method mentioned in the question, we have to enumerate ${60 \choose 2} + {60 \choose 4} + ... + {60 \choose 10}$ (more than $7{\times}10^{10}$) instances, and check them one by one using \sricp until an almost stable matching is found. Assuming that \sricp takes 0.001 seconds per instance, in the worst case we will have to test all $7{\times}10^{10}$ instances, and it will take at least 2 years.  This observation confirms with the theoretical result on the NP-hardness of Almost \sri; further investigations are planned as part of our future work.

\begin{table}[t]
\centering
\caption{\srtiasp vs. \sriaf}
\label{tab:asp-argsri}
\begin{tabular}{ccccccc}
\hline \hline
            &         & \#instances & \multicolumn{2}{c}{\srtiasp}           & \multicolumn{2}{c}{\sraf} \\
completeness&         & with a      & \multicolumn{2}{c}{average time (sec)} & \multicolumn{2}{c}{average time (sec)} \\
degree      &\#agents & solution    & exists solution & no solution          & exists solution & no solution \\

\hline
\hline
25\%
% & 20 & 20 & 0.009 & NA & 0.003 & NA \\
% & 40 & 11 & 0.029 & 0.024 & 0.02 & 0.018 \\
% & 60 & 10 & 0.068 & 0.077 & 0.043 & 0.045 \\
 & 80 & 13 & 0.167 & 0.183 & 0.118 & 0.126 \\
 & 100 & 14 & 0.370 & 0.442 & 0.242 & 0.254 \\
 & 150 & 8 & 1.995 & 1.994 & 1.002 & 1.077 \\
 & 200 & 10 & 8.794 & 8.323 & 2.674 & 2.604 \\
\hline % \noalign{\vspace {.5cm}}
50\%
% & 20 & 17 & 0.014 & 0.011 & 0.011 & 0.015 \\
% & 40 & 11 & 0.06 & 0.084 & 0.045 & 0.051 \\
% & 60 & 16 & 0.276 & 0.381 & 0.202 & 0.223 \\
 & 80 & 13 & 0.852 & 1.106 & 0.521 & 0.527 \\
 & 100 & 12 & 3.192 & 2.828 & 1.073 & 1.01 \\
 & 150 & 14 & 15.88 & 15.59 & 5.029 & 4.918 \\
 & 200 & 9 & 69.65 & 72.08 & 13.35 & 14.19 \\
\hline % \noalign{\vspace {.5cm}}
75\%
% & 20 & 18 & 0.028 & 0.020 & 0.017 & 0.011\\
% & 40 & 14 & 0.136 & 0.207 & 0.113 & 0.122 \\
% & 60 & 13 & 0.744 & 0.997 & 0.475 & 0.479 \\
 & 80 & 8 & 3.971 & 3.346 & 1.248 & 1.252 \\
 & 100 & 13 & 11.062 & 8.801 & 2.795 & 2.772 \\
 & 150 & 9 & 50.52 & 51.70 & 12.65 & 12.222 \\
 & 200 & 12 & 175.81 & 202.46 & 36.35 & 35.72 \\
\hline % \noalign{\vspace {.5cm}}
100\%
% & 20 & 15 & 0.024 & 0.032 & 0.026 & 0.020 \\
% & 40 & 15 & 0.227 & 0.301 & 0.209 & 0.202 \\
% & 60 & 14 & 1.483 & 2.110 & 0.808 & 0.937 \\
 & 80 & 13 & 7.396 & 7.426 & 2.525 & 2.523 \\
 & 100 & 10 & 24.435 & 16.268 & 5.677 & 5.479 \\
 & 150 & 11 & 112.23 & 113.21 & 24.94 & 24.75 \\
 & 200 & 12 & 388.58 & 353.8 & 77.89 & 74.67 \\ \hline \hline
\end{tabular}%
\vspace{-0.5\baselineskip}
\vspace{-\baselineskip}
\end{table}

\smallskip \noindent {\em Scalability of \srtiasp vs. \sraf}
The ASP-based method \sraf~\cite{Amendola18} utilizes abstract argumentation frameworks~\cite{DUNG1995} for the Stable Marriage problem.
According to \sraf, an argumentation framework $\ii{AF}{=}(\ii{Arg},\ii{Att})$ models an \sr instance if the arguments in $\ii{Arg}$ are pairs of different agents, and the attacks $((a,b),(x,y))$ in $\ii{Att} \subseteq \ii{Arg} {\times} \ii{Arg}$ satisfy the following properties: (i) $x=a$ and $b \prec_{x} y$, or (ii) $y=b$ and $a \prec_{y} x$. For every \sr instance, once the arguments and attacks are generated, they are translated into a program $P_{\is{AF}}$ using the existing methods~\cite{WuCG09}. In particular, for every argument $(a,b)$ in $\ii{AF}$, if $(x_1,y_1), (x_2,y_2), \dots, (x_m,y_m)$ are the arguments that attack $(a,b)$, the following rule is included in $P_{\is{AF}}$:
$$in(a,b) \lar \no\ in(x_1,y_1), \no\ in(x_2,y_2), \dots, \no\ in(x_m,y_m).$$
There is a one-to-one correspondence between the answer sets for the program $P_{\is{AF}}$ and the stable extensions of $\ii{AF}$, due to Theorem~2 of Amendola~\citeyear{Amendola18}.% and \cite[Theorem 62]{DUNG1995}.

We have extended \sraf's argumentation framework from \sr to \sri, by defining the arguments as pairs of agents that are acceptable to each other. implemented (in Python) the transformation of an \sri instance into an argumentation framework $\ii{AF}$ and then to a program $P_{\is{AF}}$. We have experimented with this extended \sraf (called \sriaf from now on) on the \sr instances generated by \sricp's random instance generator. The results
%for 80--200 agents
are shown in Table~\ref{tab:asp-argsri}.

\vspace{-0.5ex}\begin{itemize}
\item[O10] For large \sri instances, \sriaf performs significantly better than \srtiasp.
\end{itemize} \vspace{-1ex}

%------------------------------------------------------------------------
\section{Conclusion} \label{sec:conc}

We have developed a formal framework, called \srtiasp, that is sound and complete (Theorem~\ref{thm:correct}) and general enough to provide solutions to various stable roommates problems, such as, \sr, \sri, \srt, \srti, Egalitarian \srti, Rank Maximal \srti, Almost \srti. Except for \sr and \sri, all these variations are intractable (Table~\ref{tab:complexity}).
%To the best of the authors' knowledge, there exist no exact method or implementation, that provides solutions to all these intractable problems.
Our ongoing work involves extending \srtiasp to other computationally hard stable roommates problems.

Since \srtiasp utilizes Answer Set Programming (ASP), the formulations of problems are concise and elaboration tolerant, and thus \srtiasp provides a flexible framework to study variations of stable roommates problems. Having such a flexible framework and implementation is valuable for studies in matching theory.

We have evaluated \srtiasp over different sizes of randomly generated \sri instances, and have made many interesting observations (O1--O10) about its scalability over different intractable variations of \sri, and in comparison with \sricp and \sraf over tractable variations of \sr.
%As part of the experimental evaluations, we have also developed a method to generate \srti instances.
The results of our empirical analysis of \srtiasp are promising, in particular, for computationally hard problems. Considering that the input sizes of the instances are large enough for many dormitories, the results of experiments are also promising for real-world applications.

Comparisons with \sricp and \sriaf has helped us to better observe the flexibility of \srtiasp due to elaboration tolerant ASP representations. It is easier to extend \srtiasp to address different variations of \sr, while \sricp and \sriaf require further studies in modeling as well as implementation. Note that \sricp uses \choco via a Java wrapper, and \sriaf solves \sri via an argumentation framework. As a future work, we plan to investigate how \sricp and \sriaf can be extended to \srti and its intractable versions.

The Stable Marriage problem with Ties (\smt) under strong stability, which can be solved with a polynomial time algorithm~\cite{irving1994}, has been used as a benchmark in ASP competitions. Its representation is based on ranks instead of preferences, and does not utilize choice rules, cardinality expressions or weak constraints. With its intractable variations (under weak stability), \srtiasp contributes to ASP studies by providing an elaboration tolerant formulation and a complementary and rich set of benchmark instances of Stable Roommates problems.

%------------------------------------------------------------------------
%\vspace{-4ex}
%\newpage
\bibliographystyle{acmtrans}
%\bibliography{reference}

%------------------------------------------------------------------------

\label{lastpage}
\end{document}